\newcommand*{\myov}[1]{\overbracket[0.45pt][-1pt]{#1}}
\newenvironment{talign*}
 {\csname align*\endcsname}
 {\endalign}
\newcommand{\gsum}{\textsc{sum}\xspace}
\newcommand{\gmax}{\textsc{max}\xspace}
\newcommand{\gmean}{\textsc{mean}\xspace}
\newcommand{\gset}{\textsc{set2set}\xspace}
\newcommand{\gattn}{\textsc{attention}\xspace}
\newcommand{\gsort}{\textsc{sort}\xspace}
\newcommand{\proposed}{SSRead\xspace}
\newcommand{\gread}{GRead\xspace}
\newcommand{\graphcl}{GraphCL\xspace}
\newcommand{\graphim}{InfoGraph\xspace}
\newcommand{\gcn}{GCN\xspace}
\newcommand{\gin}{GIN\xspace}
\newcommand{\dgcnn}{DGCNN\xspace}
\newcommand{\sagpool}{SAGPool\xspace}
\newcommand{\diffpool}{DiffPool\xspace}
\newcommand{\gunet}{GUNet\xspace}
\newcommand{\sortpool}{SortPool\xspace}
\newcommand{\eigenpool}{EigenPool\xspace}
\newcommand{\topkpool}{TopKPool\xspace}
\newcommand{\dnd}{D\&D\xspace}
\newcommand{\proteins}{PROTEINS\xspace}
\newcommand{\nci}{NCI1\xspace}
\newcommand{\mutag}{MUTAG\xspace}
\newcommand{\mutagen}{Mutagen\xspace}
\newcommand{\imdbb}{IMDB-B\xspace}
\newcommand{\imdbm}{IMDB-M\xspace}
\newcommand{\uset}{\mathcal{D}}
\newcommand{\mmat}{\bm{M}}
\newcommand{\amat}{\bm{A}}
\newcommand{\xmat}{\bm{X}}
\newcommand{\hmat}{\bm{H}}
\newcommand{\gmat}{{\myov{\bm{H}}}}
\newcommand{\pmat}{\bm{P}}
\newcommand{\wmat}{\bm{W}}
\newcommand{\svec}{\bm{s}}
\newcommand{\zvec}[1]{\bm{z}_{#1}}
\newcommand{\hvec}[1]{\bm{h}_{#1}}
\newcommand{\gvec}[1]{{\myov{\bm{h}}}_{#1}}
\newcommand{\pvec}[1]{\bm{p}_{#1}}
\newcommand{\wvec}[2]{\bm{w}_{#2,#1}}
\newcommand{\softssa}{\text{SSA}_{\gamma}}
\newcommand{\closs}{\mathcal{L}_{class}}
\newcommand{\sloss}{\mathcal{L}_{contr}}
\newcommand{\ploss}{\mathcal{L}_{align}}
\newcommand{\smallsection}[1]{{\vspace{0.05in} \noindent \bf {#1.\hspace{5pt}}}}
\DeclareMathOperator*{\expect}{\mathbb{E}}
\DeclareMathOperator*{\argmin}{argmin}
\newtheorem{theorem}{Theorem}
\def\BibTeX{{\rm B\kern-.05em{\sc i\kern-.025em b}\kern-.08em
    T\kern-.1667em\lower.7ex\hbox{E}\kern-.125emX}}
\begin{document}

\title{Learnable Structural Semantic Readout for Graph~Classification}

\author{
\IEEEauthorblockN{Dongha Lee$^1$, Su Kim$^2$, Seonghyeon Lee$^2$, Chanyoung Park$^3$, Hwanjo Yu$^{2*}$}
\IEEEauthorblockA{
$^1$\textit{University of Illinois at Urbana-Champaign (UIUC), Urbana, IL, United States}\\
$^2$\textit{Pohang University of Science and Technology (POSTECH), Pohang, Republic of Korea}\\
$^3$\textit{Korea Advanced Institute of Science and Technology (KAIST), Daejeon, Republic of Korea}\\
donghal@illinois.edu, \{kimsu, sh0416, hwanjoyu\}@postech.ac.kr, cy.park@kaist.ac.kr}
\thanks{* Corresponding author}
}

\setlength{\abovecaptionskip}{5pt}
\setlength{\belowcaptionskip}{5pt}
\setlength{\floatsep}{7pt plus 1.0pt minus 2.0pt}
\setlength{\textfloatsep}{5pt plus 1.0pt minus 2.0pt}
\setlength{\dblfloatsep}{10pt plus 1.0pt minus 2.0pt}
\setlength{\dbltextfloatsep}{7pt plus 1.0pt minus 2.0pt}

\newcolumntype{L}{>{\raggedright\arraybackslash}m{0.31\linewidth}}
\newcolumntype{Z}{>{\raggedright\arraybackslash}m{0.2\linewidth}}
\newcolumntype{P}{>{\centering\arraybackslash}m{0.05\linewidth}}
\newcolumntype{Q}{>{\centering\arraybackslash}m{0.09\linewidth}}
\newcolumntype{R}{>{\raggedleft\arraybackslash}m{0.095\linewidth}}
\newcolumntype{S}{>{\centering\arraybackslash}m{0.1\linewidth}}

\let\oldnl\nl
\newcommand{\nonl}{\renewcommand{\nl}{\let\nl\oldnl}}

\maketitle

\begin{abstract}
With the great success of deep learning in various domains, graph neural networks (GNNs) also become a dominant approach to graph classification.
By the help of a global readout operation that simply aggregates all node (or node-cluster) representations, 
existing GNN classifiers obtain a graph-level representation of an input graph and predict its class label using the representation.
However, such global aggregation does not consider the structural information of each node, which results in information loss on the global structure.
Particularly, it limits the discrimination power by enforcing the same weight parameters of the classifier for all the node representations;
in practice, each of them contributes to target classes differently depending on its structural semantic.
In this work, we propose structural semantic readout (\proposed) to summarize the node representations at the position-level, which allows to model the position-specific weight parameters for classification as well as to effectively capture the graph semantic relevant to the global structure.
Given an input graph, \proposed aims to identify structurally-meaningful positions by using the semantic alignment between its nodes and structural prototypes, which encode the prototypical features of each position.
The structural prototypes are optimized to minimize the alignment cost for all training graphs, while the other GNN parameters are trained to predict the class labels.
Our experimental results demonstrate that \proposed significantly improves the classification performance and interpretability of GNN classifiers while being compatible with a variety of aggregation functions, GNN architectures, and learning frameworks.
\end{abstract}

\begin{IEEEkeywords}
Graph classification, Graph neural networks, Global structural information, Learnable graph readout
\end{IEEEkeywords}

\section{Introduction}
\label{sec:intro}
Graph classification refers to the task of predicting class labels of input graphs, and it has been applied to a wide range of graphs, including molecular structures~\cite{wale2008comparison}, biological networks~\cite{borgwardt2005protein}, and social networks~\cite{yanardag2015deep}.
The key challenge is to extract informative (or discriminative) graph features from topological structures (i.e., nodes and edges) and auxiliary node features.
Early work on graph classification utilized a variety of graph kernels~\cite{borgwardt2005shortest, shervashidze2011weisfeiler} to generate task-agnostic features from complex graph structures and use them for classification.
However, such kernel-based approaches incur expensive computational cost which is quadratic (or sometimes cubic) with respect to the number of training graphs and their nodes~\cite{wu2020comprehensive}.
In addition, they are not able to jointly learn graph features with an off-the-shelf classifier such as support vector machine (SVM);
this makes the features non-adaptive to a target task~\cite{errica2019fair}.

Recently, there have been a lot of attempts to exploit graph neural networks (GNNs) for graph classification~\cite{ying2018hierarchical, xu2018powerful, lee2019self, ma2019graph, errica2019fair}, and they have shown promising results without expensive computations required by graph kernels.
They aim to learn the effective representations (i.e., low-dimensional latent vectors) of input graphs while being trained to predict the class labels based on the representations.
Most of them basically adopt the message passing architecture such as graph convolutional networks (GCNs)~\cite{kipf2017semi} that compute each node representation based on its local graph structure.
Through hierarchical graph pooling~\cite{ma2019graph, lee2019self, ying2018hierarchical, bianchi2020spectral} and readout~\cite{li2015gated,vinyals2015order,xu2018powerful} which summarize the node representations, they finally obtain a graph-level representation of the entire input graph.

\begin{figure}[t]
	\centering
	\includegraphics[width=\linewidth]{./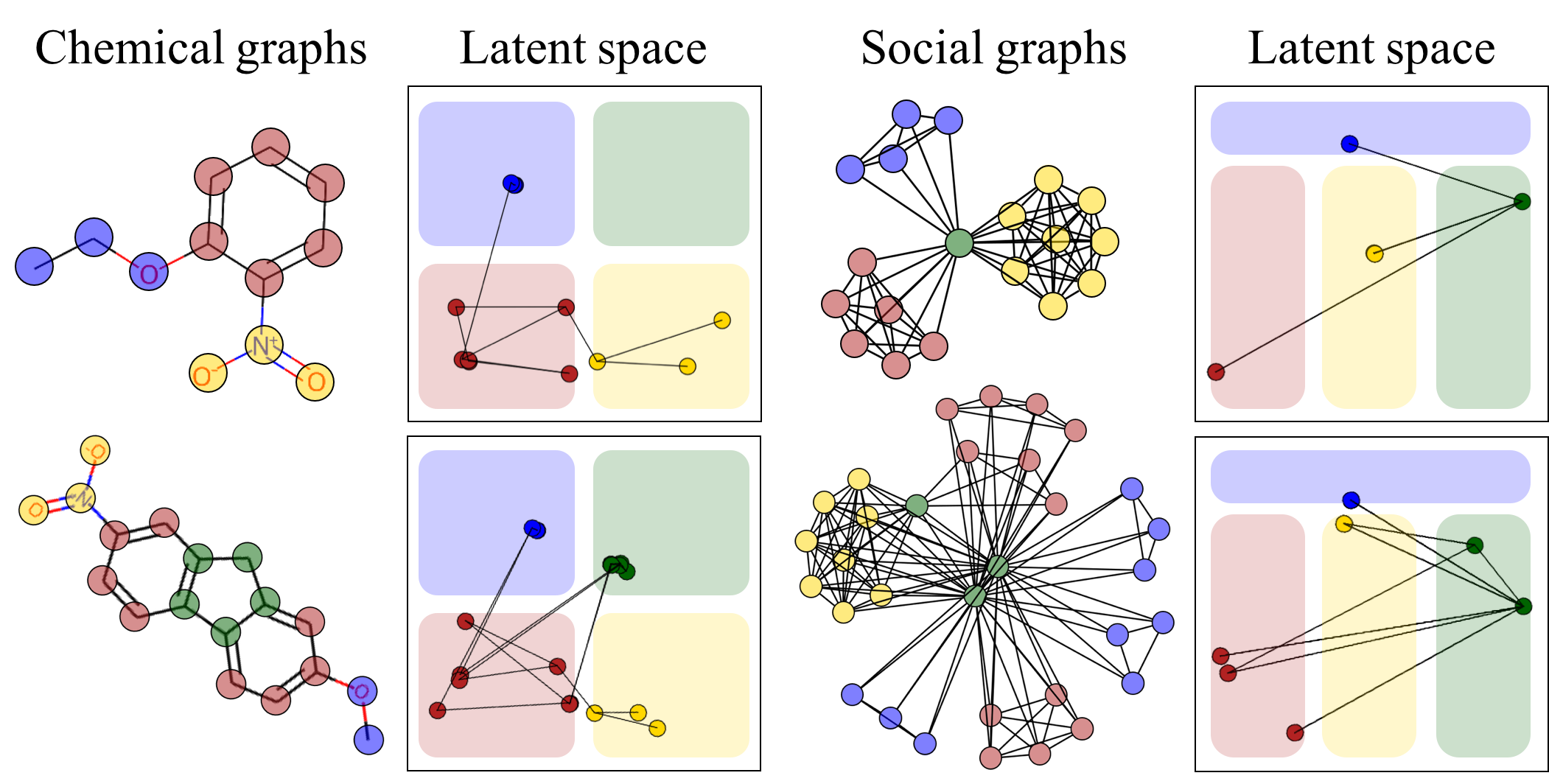}
	\caption{Examples of graph-level structural information. Two input graphs are consistently morphed in the space where structural semantic is encoded (plotted by t-SNE), and the structural positions are marked in four different colors.}
	\label{fig:example}
\end{figure}

A recent challenge for improving the expressivity of the GNNs is leveraging the global (or graph-level) structural information. 
In case of convolutional neural networks (CNNs) for encoding images, 
they fully utilize the fixed position of each pixel by the help of spatial convolution and spatial pooling, thereby obtaining a structured feature map that preserves the spatial information.
Unlike images whose pixels have consistent positions in a structured way, graphs implicitly represent the global position of their nodes by an adjacency matrix whose node-order can be permuted (i.e., graph isomorphism).
In this sense, most studies have developed GNN modules that capture such global position information into the representations at the node-level~\cite{bouritsas2020improving, pei2020geom, you2019position} (e.g., structure-aware message passing which further considers structural information of neighbor nodes), or at the subgraph-level~\cite{ma2019graph, bianchi2020spectral, yuan2020structpool} (e.g., graph pooling based on spectral clustering).

Despite these efforts, the global structural information has not been carefully considered at the graph-level representations yet.
Note that every GNN classifier uses a global readout operation, which simply aggregates all remaining node (or subgraph) representations, to obtain a permutation-invariant graph-level representation~\cite{ying2018hierarchical, xu2018powerful, lee2019self, gao2019graph}. 
In this work, we point out that the global readout does not consider the structural information of each node, which incurs information loss on the global structure of an input graph.
As depicted in Figure~\ref{fig:example}, graphs are composed of several structurally-meaningful positions, termed as \textit{structural positions}; e.g., different functional groups for molecular graphs, and communities with different densities for social graphs. 
In this case, a globally-aggregated representation cannot accurately express the inclusive graph semantic relevant to the structural positions.
Moreover, it limits the capability of the GNN classifier in that the classification layer considers all the aggregated nodes by using the global weight parameters.
That is, it cannot model different interactions of each node (atom) with the target class (molecular property) depending on its structural position (functional group).

To tackle this challenge, we propose a novel graph readout technique, \underline{S}tructural \underline{S}emantic \underline{Read}out (\proposed), that outputs the graph-level representation explicitly keeping the global structural information.
Motivated by consistently-morphed graphs in the latent space according to its structural semantic (Figure~\ref{fig:example}), our \proposed takes advantage of consistent positions in the latent space, which eventually correspond to the structurally-meaningful positions in each graph.
We first introduce trainable parameters for representing the predefined number of the structural positions, referred to as \textit{structural prototypes}, to encode the latent semantic (i.e., prototypical features) of the positions seen in the training graphs.
Using the structural prototypes, \proposed aligns each node representation with its semantically-closest position, then it aggregates the set of node representations aligned with each position.
By doing so, \proposed produces a permutation-invariant graph-level representation that consists of multiple-and-consistent positions.
In the end, it allows to model the classification layer by using position-specific weight parameters.

As a part of the GNN classifier trained for its target task, the structural prototypes are simultaneously optimized so that the total alignment cost is minimized for all training graphs.
However, the optimal cost for the hard (i.e., discontinuous) alignment between the nodes and the positions is not differentiable.
Thus, in order to effectively update the parameters by using its gradient, we utilize the soft-relaxation of the node-position alignment cost which considers all possible alignments.
That is, the structural prototypes are optimized to be better aligned with the node representations, while the GNN parameters are optimized to more accurately classify an input graph.
These two types of optimizations mutually enhance each other in a unified manner.

Our extensive evaluation on graph classification benchmarks demonstrates the effectiveness of \proposed in terms of both classification performance and interpretability.
\proposed significantly enhances the discrimination power of a classifier for all the datasets, regardless of aggregation functions, GNN architectures, and learning objectives.
In particular, it achieves higher accuracy for predicting the structural properties (e.g., the number of rings in an input molecule) than the global readout, by effectively capturing the global structural information into its final graph-level representation.
Furthermore, our qualitative analyses on node-position alignment and its localization performance support that \proposed provides better interpretability of GNNs as well.

The main contributions are summarized as follows.
\begin{itemize}
    \item \textbf{Compatibility} --- \proposed can be easily embedded into any GNN architectures (i.e., compatible with a variety of message passing and graph pooling layers) and make use of any aggregation functions for its position-level readout.
    
    \item \textbf{Performance} --- \proposed improves the classification accuracy by learning the position-level graph representations with the position-aware classification layer, which exploits the global structural information. 
    
    \item \textbf{Interpretability} --- With the structural prototypes optimized from training data, \proposed can perform segmentation on a graph according to the structural semantic and localize further discriminative regions for the target class.
\end{itemize}

\section{Preliminary}
\label{sec:preliminary}
\subsection{Problem Formulation}
Let $\uset$ be a training set of graph instances ($G\in\mathcal{G}$) with their class labels ($y\in\mathcal{Y}$), where there exist $C$ classes, $\mathcal{Y}=\{1,\ldots,C\}$.
Each graph can be represented as an adjacency matrix $\amat\in\{0,1\}^{N\times N}$ and its node feature matrix $\xmat\in\mathbb{R}^{N\times F}$, where $N$ and $F$ are the numbers of nodes and node features, respectively.
The goal of our work is to train an effective \textit{graph encoder}
that maps input graphs into the low-dimensional latent representations, and also its \textit{classifier} 
that accurately predicts the class labels using the representations.

\subsection{Graph Neural Networks for Classification}
\label{subsec:gnn}
The most basic architecture of graph neural networks (GNNs) is a stack of graph convolutional layers~\cite{kipf2017semi} or graph attentional layers~\cite{velivckovic2018graph}.
Each layer of this architecture, also known as a \textit{message-passing} mechanism, computes the representation of each node by aggregating those of its neighbor nodes.
To be specific, starting from the input node features $\hmat^{(0)}=\xmat\in\mathbb{R}^{N \times F}$, the output of the $l$-th graph convolutional layer $\hmat^{(l)}\in\mathbb{R}^{N \times d}$ is computed by\footnote{For simplicity, we use the same dimensionality $d$ for all the layers.}
\begin{equation}
\label{eq:gcn}
    \hmat^{(l)} = \text{ReLU}\left(\tilde{\bm{D}}^{-\frac{1}{2}}\tilde{\amat} \tilde{\bm{D}}^{-\frac{1}{2}} \hmat^{(l-1)} \bm{W}^{(l)}\right),
\end{equation}
where $\tilde{\amat}=\amat+\bm{I}$ is the adjacency matrix with added self-loops, $\tilde{\bm{D}}$ is the degree matrix of $\tilde{\amat}$, and $\bm{W}^{(l)}$ is a trainable weight matrix.
The node representations computed by the last graph convolution (i.e., the $L$-th layer) is regarded as the output of a GNN node encoder, $\text{GNN}(G; \Theta)=\hmat^{(L)}\in\mathbb{R}^{N \times d}$ where $\Theta$ is the set of all trainable parameters in the layers.

To summarize the node representations into a higher-level graph representation, recent studies have developed various hierarchical pooling techniques that iteratively downsample the nodes while preserving the topological structure based on a bottom-up approach.
This type of graph pooling layer allows GNNs to attain scaled-down graphs in an end-to-end manner, based on score-based node selection~\cite{zhang2018end,lee2019self,gao2019graph} and node clustering~\cite{ying2018hierarchical,ma2019graph,bianchi2020spectral}.
In the end, a graph-level representation is obtained by a readout operation that aggregates the representations of all remaining nodes, then the final classification scores are computed based on the representation.
Figure~\ref{fig:gnn} illustrates the overall architecture of a GNN classifier.

\begin{figure}[t]
	\centering
	\includegraphics[width=\linewidth]{./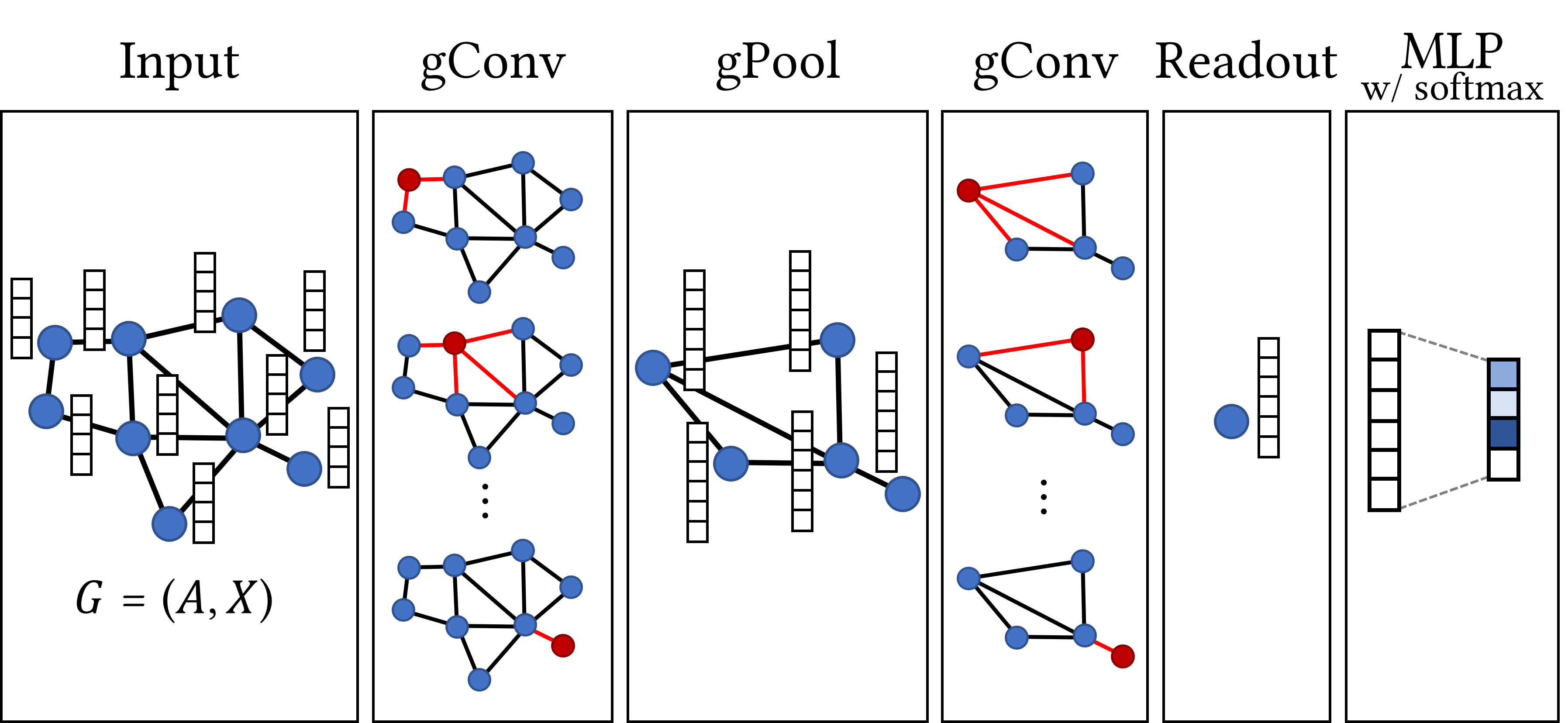}
	\caption{The overall architecture of a GNN classifier.}
	\label{fig:gnn}
\end{figure}

\section{Related Work}
\label{sec:related}


We review the literature on hierarchical graph pooling and clarify its difference from a readout operation.
Most of the existing graph pooling techniques that reduce the number of nodes from $N$ to $N'$ can be categorized as either node selection or node clustering.
First, \textit{node selection} methods select $N'$ nodes based on their self-attention scores~\cite{lee2019self} or projected scores~\cite{gao2019graph} with the other nodes discarded. 
That is, the pooled representation $\gmat\in\mathbb{R}^{N' \times d}$ and the corresponding adjacency matrix $\myov{\amat}\in\mathbb{R}^{N'\times N'}$ are obtained by using the indices of $N'$ selected nodes $idx =\text{select}(\text{score}(\hmat), N')$.
\begin{equation*}
    \gmat = \hmat[idx, :] \text{    and    } \myov{\amat} = \amat[idx, idx].
\end{equation*}
On the other hand, \textit{node clustering} methods reduce the size of a graph by pooling the node representations based on their assigned $N'$ clusters.
Given a cluster assignment matrix $C\in\mathbb{R}^{N \times N'}$ obtained by graph clustering, such as soft node-clustering~\cite{ying2018hierarchical, yuan2020structpool} and spectral clustering~\cite{ma2019graph, bianchi2020spectral}, the pooled representation is computed by
\begin{equation*}
    \gmat = \bm{C}^T\hmat \text{    and    }
    \myov{\amat} = \bm{C}^T\amat\bm{C}.
\end{equation*}
However, the hierarchical graph pooling is not capable of mapping input graphs into consistent representations.
To be precise, the pooled representation $\gmat$ is still row-wise permutable according to the order of selected nodes\footnote{In this work, we categorize \sortpool~\cite{zhang2018end} as a readout operation (not a hierarchical pooling), because it arranges top-$N'$ nodes in a consistent order by their last hidden dimension. In the PyTorch Geometric library, it is also implemented as one of the \textit{global} methods along with other readout operations.}
or identified node clusters within an input graph, while keeping the structural information in its adjacency matrix $\myov{\amat}$.
For this reason, the node (or node cluster) representations from the graph pooling need to be summarized into a consistent graph-level representation in the end, mainly performed by a \textit{permutation-invariant} readout operation.

In this work, we point out that the global readout operation cannot capture the structural information of each node (or node cluster) representation.
In Figure~\ref{fig:example}, a single graph instance consists of several structurally-meaningful positions, referred to as \textit{structural positions}, whose nodes share distinct local structures.
A global aggregation of all node representations, not explicitly considering such position information, makes the final representation difficult to clearly express the global structure of an input graph.
Particularly, in terms of classification, it allows its classification layer only to adopt global weight parameters that equally consider all the nodes regardless of their structural position.
Since different local structures can differently contribute to target classes, the global weight eventually limits the discrimination power of the classifier.

\section{Structural Semantic Readout}
\label{sec:method}



\subsection{Structural Semantic Readout}
\label{subsec:ssp}
The goal of \proposed is to attain a \textit{structured} graph representation by summarizing the \textit{unordered} node representation.
Formally, it takes the hidden representation $\hmat=[\hvec{1};\ldots;\hvec{N}]\in\mathbb{R}^{N \times d}$ as its input, then outputs $\gmat=[\gvec{1};\ldots;\gvec{K}]\in\mathbb{R}^{K \times d}$ where $K$ is the predefined number of structural positions.
In other words, it maps a node-level (or node cluster-level) representation into a position-level representation that keeps structural consistency for any input graph.
To this end, \proposed (i) aligns each hidden vector with its semantically-closest structural position, then (ii) summarizes the set of hidden vectors aligned with the same position.

\subsubsection{Semantic alignment with structural prototypes}
To obtain the alignment between the nodes and structural positions,
we first define $\mathcal{M}\subset\{0,1\}^{N \times K}$ to be the set of possible binary alignment matrices.
For an alignment matrix $\mmat\in\mathcal{M}$, each entry indicates the alignment between node $n$ and structural position $k$;
i.e., $\mmat_{nk}=1$ if node $n$ is aligned with position $k$, otherwise $\mmat_{nk}=0$.
We additionally enforce the constraint $\sum_k \mmat_{nk}=1$ so that each node should be aligned with only a single position.

Then, we introduce \textit{structural prototypes} $\pmat=[\pvec{1};\ldots,\pvec{K}]\in\mathbb{R}^{K \times d}$, which parameterize the latent semantic (i.e., prototypical hidden features) of the structural positions.
Using a target hidden representation $\hmat$ and the structural prototypes $\pmat$, we can fill the cost matrix $\Delta(\hmat, \pmat)\in\mathbb{R}^{N\times K}$ whose entry becomes the alignment cost (or distance) between each node and structural position.
In this work, we simply use the cosine distance between $\hvec{n}$ and $\pvec{k}$ for the cost function, i.e.,
$[\Delta(\hmat, \pmat)]_{nk}:=\delta(\hvec{n}, \pvec{k}) = 1 - {\hvec{n}\cdot\pvec{k}}/{\lVert\hvec{n}\rVert\lVert\pvec{k}\rVert}$.

Given an arbitrary alignment $\mmat$, the total alignment cost is computed by the inner product between $\mmat$ and $\Delta(\hmat, \pmat)$. 
In this sense, the optimal cost for \textit{structural semantic alignment} (denoted by SSA) and its alignment matrix (denoted by $\mmat^*$) are obtained by
\begin{equation}
\label{eq:align}
\begin{split}
    \text{SSA}(\hmat, \pmat) &= \hspace{6pt} \min_{\mmat\in\mathcal{M}} \langle \mmat, \Delta(\hmat, \pmat) \rangle \\
    \mmat^* &= \argmin_{\mmat\in\mathcal{M}} \langle \mmat, \Delta(\hmat, \pmat) \rangle,
\end{split}
\end{equation}
where $\langle\cdot,\cdot\rangle$ is the Frobenius inner product between two matrices.
Due to the constraint on the alignment matrices, Equation~\eqref{eq:align} can be efficiently solved in a node-wise manner, which are $\text{SSA}(\hmat, \pmat) = \sum_n \min_{k} \delta(\hvec{n}, \pvec{k})$ and $\mmat_{nk}^*=\mathbb{I}[k == \argmin_{k'}\delta(\hvec{n}, \pvec{k'})]$.

\subsubsection{Position-level readout based on node-position alignment}
Using the optimal node-position alignment matrix $\mmat^*$, \proposed generates a summarized vector for structural position $k$:
\begin{equation}
\label{eq:sspool}
\begin{split}
    \gvec{k} &= \phi(\hmat;\mmat^*, k) = \phi(\{\hvec{n}|\mmat^*_{nk}=1, \forall n \in [1, N] \}).  
\end{split}
\end{equation}
$\phi:\mathcal{P}(\mathbb{R}^d)\rightarrow\mathbb{R}^d$ is an aggregation function which maps the set of vectors into a global vector, and $\mathcal{P}$ denotes the powerset of a space.
To be specific, \gmax, \gmean, and \gsum generate a single vector by taking the maximum, mean, and summation value for each latent dimension, respectively. In addition, \gattn~\cite{li2015gated} and \gset~\cite{vinyals2015order} adopt the attention mechanism to take into consideration the importance of each vector, and \gsort~\cite{zhang2018end} performs a 1D spatial convolution along the node vectors sorted by their last hidden dimension.
In the end, $\text{\proposed}(\hmat;\pmat) = [\gvec{1};\ldots;\gvec{K}]$ produces the position-level representation of reduced size $K$.
The overview of \proposed is illustrated in Figure~\ref{fig:arch}.

\begin{theorem}
\label{thm:permuteinv}
\proposed is a permutation-invariant function, whose output is invariant under node permutations.
That is, $\text{\proposed}(\hmat;\pmat)=\text{\proposed}(\pi(\hmat);\pmat)$ for any row-wise matrix permutation $\pi$.
\end{theorem}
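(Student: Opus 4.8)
The plan is to reduce the statement to two observations: (i) the optimal node--position alignment $\mmat^*$ is computed independently for each node, so a row permutation of $\hmat$ merely permutes the rows of $\mmat^*$ by the same permutation; and (ii) each aggregated vector $\gvec{k}$ depends on the aligned hidden vectors only through the \emph{unordered} collection $\{\hvec{n} : \mmat^*_{nk}=1\}$, which is unchanged by such a permutation. Since the structural prototypes $\pmat$ are fixed parameters that do not depend on the node ordering, combining (i) and (ii) will give the claim.

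First I would make the node-wise structure explicit. As noted after Equation~\eqref{eq:align}, the constraint $\sum_k \mmat_{nk}=1$ decouples $\min_{\mmat\in\mathcal{M}}\langle\mmat,\Delta(\hmat,\pmat)\rangle$ across rows, giving $\mmat^*_{nk}=\mathbb{I}[k=\argmin_{k'}\delta(\hvec{n},\pvec{k'})]$, where I fix a deterministic tie-breaking rule (say, smallest index $k'$) so that $\mmat^*$ is well defined. Thus there is a fixed map $a:\mathbb{R}^d\rightarrow\{1,\ldots,K\}$, depending only on $\pmat$, with $a(\hvec{n})$ the position assigned to node $n$; crucially $a$ sees only the vector $\hvec{n}$, not the index $n$. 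Now let $\pi$ be a row permutation and write $\pi(\hmat)$ for the matrix whose $n$-th row is $\hvec{\pi(n)}$. Since $\delta$ acts row-wise, $\Delta(\pi(\hmat),\pmat)$ is exactly $\Delta(\hmat,\pmat)$ with its rows permuted by $\pi$, so by the decoupling the optimal alignment for the permuted input satisfies $\mmat^*(\pi(\hmat))=\pi(\mmat^*(\hmat))$ (same row permutation). Consequently, for every position $k$,
\begin{equation*}
\{\hvec{\pi(n)} : [\mmat^*(\pi(\hmat))]_{nk}=1\}=\{\hvec{m} : [\mmat^*(\hmat)]_{mk}=1\},
\end{equation*}
i.e., relabeling the nodes does not change which hidden vectors land in position $k$. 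Because the aggregation $\phi$ takes a set (multiset) of vectors as input --- \gsum, \gmean, \gmax are symmetric in their arguments, and \gattn, \gset, and \gsort are likewise invariant to the input ordering by construction --- each $\gvec{k}=\phi(\hmat;\mmat^*,k)$ is identical for $\hmat$ and $\pi(\hmat)$. Stacking over $k=1,\ldots,K$ then yields $\proposed(\hmat;\pmat)=\proposed(\pi(\hmat);\pmat)$.

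The only delicate point is the possible non-uniqueness of the $\argmin$ defining $\mmat^*$: the argument requires the tie-breaking rule to depend solely on the prototype indices and never on node indices, which the smallest-index convention satisfies (alternatively, one may note that ties occur only on a measure-zero set of hidden configurations, and are irrelevant generically). Everything else is routine bookkeeping of how a single row permutation propagates through $\Delta(\hmat,\pmat)$, the decoupled $\argmin$, and finally the set-valued argument of $\phi$; I would also state once, as a lemma or remark, the permutation-invariance of each listed aggregation $\phi$ if full rigor is desired.
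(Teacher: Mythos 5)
Your proposal is correct and follows essentially the same route as the paper's own proof: the row permutation commutes with the cost matrix $\Delta(\hmat,\pmat)$ and hence with the optimal alignment, giving $\mmat^*_\pi=\pi(\mmat^*)$, after which the permutation-invariance of the aggregation $\phi$ on the (multi)set of aligned vectors finishes the argument. Your explicit handling of the node-wise decoupling and of tie-breaking in the $\argmin$ is additional rigor that the paper's proof leaves implicit, but it does not change the underlying approach.
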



\begin{proof}
Let $\mmat^* = \argmin_{\mmat\in\mathcal{M}}\langle \mmat, \Delta(\hmat, \pmat)\rangle$.
Given any row-wise matrix permutation $\pi$ and permutation-invariant aggregation function $\phi$, the $k$-th vector from the \proposed layer is obtained by 
\begin{talign*}
\small
\mmat_\pi^{*} &= \argmin_{\mmat\in\mathcal{M}}\langle \mmat, \Delta(\pi(\hmat), \pmat)\rangle \\
&= \argmin_{\mmat\in\mathcal{M}}\langle \mmat, \pi(\Delta(\hmat, \pmat))\rangle \\
&= \pi(\argmin_{\mmat\in\mathcal{M}}\langle \mmat, \Delta(\hmat, \pmat)\rangle)=\pi(\mmat^*),
\end{talign*}\vspace{-20pt}
\begin{talign*}
\small
[\text{\proposed}(&\pi(\hmat);\pmat)]_k \\
&= \phi(\pi(\hmat);\mmat_\pi^*,k) = \phi(\pi(\hmat);\pi(\mmat^*),k) \\
&= \phi(\hmat; \mmat^*, k) = [\text{\proposed}(\hmat;\pmat)]_k. \qquad \qedhere
\end{talign*}
\end{proof}

\subsubsection{Complexity analyses}
\label{subsubsec:complexity}
The semantic alignment of \proposed takes $O(K)$ for parallel node-wise computation of Equation~\eqref{eq:align}, with additional $KD$ parameters for the structural prototypes $\pmat$;
they do not depend on the number of nodes $N$. 
Since the number of structural positions is set to a smaller value than the number of nodes (i.e., $K < N$), \proposed neither incurs large memory requirements nor computational costs. 

\begin{figure}[t]
	\centering
	\includegraphics[width=\linewidth]{./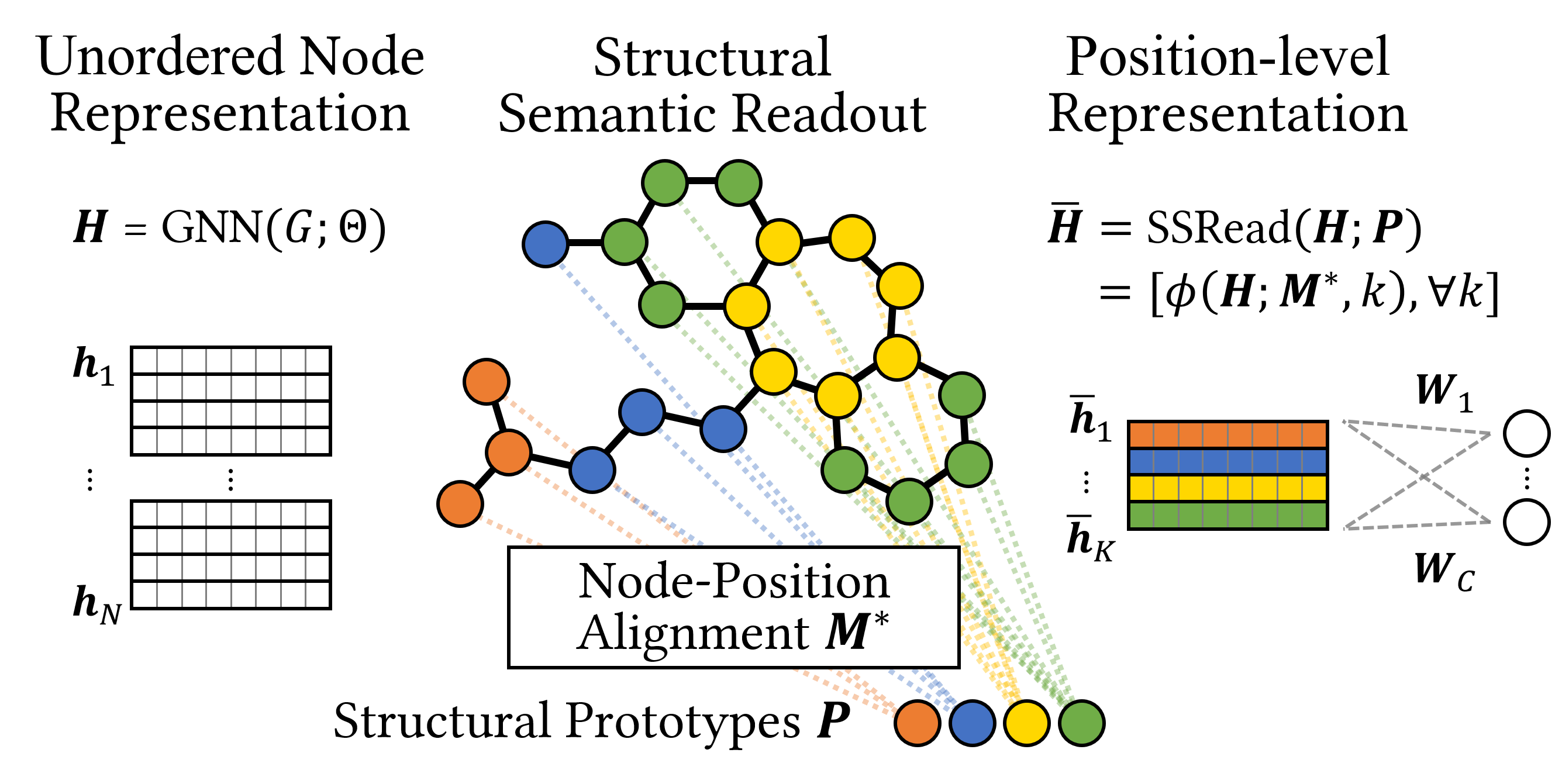}
	\caption{The overview of the structural semantic readout. Based on the optimal alignment between node representations and structural prototypes, \proposed produces the position-level representation. Best viewed in color.}
	\label{fig:arch}
\end{figure}

\subsection{Learning the Structural Prototypes}
\label{subsec:learnprotos}

The remaining challenge is to learn the structural prototypes so that they can capture the latent semantic of $K$ structural positions.
Our key idea is to find the optimal $\pmat$ that minimizes its semantic alignment cost (defined in Equation~\eqref{eq:align}) for the hidden representations of all graph instances in the training set.
However, the naive gradient of Equation~\eqref{eq:align} cannot provide $\pmat$ with useful guidance to learn better alignment, because its hard-min operation takes only the optimal alignment that yields the minimum cost (i.e., not differentiable).
For the effective update of the structural prototypes by using a well-defined gradient, we propose a soft-relaxation of our semantic alignment cost, termed as soft-SSA, by adopting the concept of global alignment kernels~\cite{cuturi2007kernel}.
Instead of the discontinuous hard-min operation, soft-SSA utilizes the soft-min operation with a smoothing parameter $\gamma$ as follows.
\begin{equation}
\label{eq:softssa}
\begin{split}
    \softssa(\hmat, \pmat) 
    &= \text{min}_\gamma\left\{\langle \mmat, \Delta(\hmat, \pmat) \rangle, \forall \mmat\in\mathcal{M}\right\},
\end{split}
\end{equation}
\begin{equation}
\label{eq:softmin}
    \text{min}_\gamma\{a_1, \ldots, a_n\} = 
    \begin{cases}
    \; \min_{i\leq n} a_i, & \gamma = 0\\
    \; -\gamma \log \sum_{i=1}^{n}e^{-a_i/\gamma}, & \gamma > 0.
    \end{cases}
\end{equation}
Note that the original SSA in Equation~\eqref{eq:align} is the special case of soft-SSA with $\gamma=0$.
In contrast to $\text{SSA}(\hmat, \pmat)$, Equation~\eqref{eq:softssa} includes all the alignment costs for possible alignment matrices, with the importance inversely proportional to their alignment cost.
Thus, the gradient of soft-SSA with respect to $\pmat$ (i.e., $\nabla_{\pmat}\softssa(\hmat, \pmat)$) can effectively update $\pmat$ to minimize its distance from $\hmat$
by considering numerous plausible alignments rather than only the optimal alignment.

Using the soft-SSA cost, the alignment loss for learning the representation of the structural prototypes is defined as
\begin{equation}
\label{eq:alignloss}
    \ploss = \frac{1}{|\mathcal{D}|}\sum_{(G,y)\in\mathcal{D}}\softssa(\text{GNN}(G;\Theta),\pmat).
\end{equation}
As the alignment loss decreases, each structural prototype $\pvec{k}$ of the \proposed layer converges to a prototypical hidden vector in the latent space induced by the GNN.
The gradient of Equation~\eqref{eq:alignloss} is efficiently calculated by the following theorem.
\begin{theorem}
\label{thm:softssa}
The soft-relaxation of the semantic alignment cost can be calculated by summing the soft-min values of the alignment cost between each node and structural positions.
That is, $\softssa(\hmat, \pmat) = \sum_{n=1}^{N} \text{min}_\gamma\left\{\delta(\hvec{n}, \pvec{k}), \forall k\in \{1,\ldots, K\}\right\}$.
\end{theorem}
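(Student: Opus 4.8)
The plan is to exploit the same separability observation that was already used to simplify the hard-min version of the semantic alignment cost in Equation~\eqref{eq:align}. Recall that the constraint $\sum_k \mmat_{nk}=1$ means an alignment matrix $\mmat\in\mathcal{M}$ is nothing more than an independent choice, for each node $n$, of exactly one position $k(n)\in\{1,\ldots,K\}$. Hence the Frobenius inner product decomposes as $\langle \mmat, \Delta(\hmat,\pmat)\rangle = \sum_{n=1}^N \delta(\hvec{n}, \pvec{k(n)})$, and the set $\mathcal{M}$ is in bijection with the product set $\{1,\ldots,K\}^N$. So I would first rewrite the soft-min in Equation~\eqref{eq:softssa} as a soft-min over all tuples $(k(1),\ldots,k(N))$ of the separable sum $\sum_n \delta(\hvec{n}, \pvec{k(n)})$.

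The core of the argument is then a distributive identity for $\mathrm{min}_\gamma$: for $\gamma>0$, the soft-min of a sum over a product index set factorizes into a sum of soft-mins. Concretely, using Equation~\eqref{eq:softmin},
\begin{equation*}
-\gamma\log\!\!\sum_{k(1),\ldots,k(N)} e^{-\frac{1}{\gamma}\sum_n \delta(\hvec{n},\pvec{k(n)})}
= -\gamma\log \prod_{n=1}^N \sum_{k=1}^K e^{-\delta(\hvec{n},\pvec{k})/\gamma}
= \sum_{n=1}^N \Bigl(-\gamma\log\sum_{k=1}^K e^{-\delta(\hvec{n},\pvec{k})/\gamma}\Bigr),
\end{equation*}
where the middle equality is the standard fact that a sum of products over a product index set equals the product of the sums (expanding $\prod_n \sum_k$), and the last equality is $\log$ turning the product into a sum. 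Each summand on the right is exactly $\mathrm{min}_\gamma\{\delta(\hvec{n},\pvec{k}) : k\in\{1,\ldots,K\}\}$, which is the claimed formula.

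For completeness I would also cover the boundary case $\gamma=0$: there $\mathrm{min}_\gamma$ is the ordinary minimum, and the identity $\min_{\mmat\in\mathcal{M}}\langle\mmat,\Delta(\hmat,\pmat)\rangle = \sum_n \min_k \delta(\hvec{n},\pvec{k})$ is precisely the node-wise decomposition already noted after Equation~\eqref{eq:align} (minimizing a sum of independent terms term by term). Since $\mathrm{min}_0$ is continuous as a limit of $\mathrm{min}_\gamma$ as $\gamma\to 0^+$, this case is also consistent with the factorization above, but it is cleanest just to invoke the earlier remark directly.

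I do not anticipate a serious obstacle here; the only thing to be careful about is making the bijection $\mathcal{M}\leftrightarrow\{1,\ldots,K\}^N$ explicit so that the interchange of the (finite) sum over $\mathcal{M}$ with the product over nodes is manifestly the elementary "sum of products equals product of sums" expansion, and noting that this is exactly the same structural fact that made the hard-min case separable. Everything else is a one-line manipulation of logarithms and exponentials.
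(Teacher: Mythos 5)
Your proposal is correct and follows essentially the same route as the paper's proof: identifying each alignment matrix with an independent per-node choice of position, factoring the sum of exponentials over $\mathcal{M}$ into a product of per-node sums, and pulling the logarithm through. The only additions are your explicit bijection $\mathcal{M}\leftrightarrow\{1,\ldots,K\}^N$ and the $\gamma=0$ boundary case, both of which the paper leaves implicit.
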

\begin{proof}
For notational simplicity, we define $m_n$ to be the index of the position aligned with node $n$ for an alignment $\mmat\in\mathcal{M}$.
\begin{talign*}
\small
\softssa(\hmat, \pmat) &= -\gamma \log \sum_{\mmat\in\mathcal{M}} \exp(-\frac{1}{\gamma}\langle \mmat, \Delta(\hmat, \pmat) \rangle)\\
&= -\gamma \log \sum_{\mmat\in\mathcal{M}} \exp(-\frac{1}{\gamma}\sum_{n}\delta(\hvec{n},\pvec{m_n}))\\
&= -\gamma \log \prod_{n}\sum_{k}\exp(-\frac{1}{\gamma}\delta(\hvec{n},\pvec{k})) \\
&= \sum_{n} -\gamma \log \sum_{k}\exp(-\frac{1}{\gamma}\delta(\hvec{n},\pvec{k})) \\
&= \sum_{n} \min_{\gamma} \{\delta(\hvec{n},\pvec{k}), \forall k\in\{1,\ldots,K\}\}. \qedhere
\end{talign*}
\end{proof}
It is worth noting that the process of optimizing the structural prototypes does not explicitly require any class labels, because it directly learns from the hidden representations $\hmat=\text{GNN}(G;\Theta)$ of the training graphs.
In other words, $\pmat$ is implicitly guided by a target task used for optimizing the GNN parameters.
For this reason, this objective can be plugged-in a variety of learning frameworks, optimized by supervised or unsupervised (self-supervised) tasks.

\subsection{Learning the GNN Parameters}
\label{subsec:learngnn}
\subsubsection{Supervised learning based on graph class labels}
\label{subsubsec:supervised}
The parameters in a GNN classifier are trained in an end-to-end manner by the supervised loss that guides to learn discriminative graph features among the classes.
For graph classification, the loss is defined based on the cross entropy,
$\closs = -\frac{1}{|\mathcal{D}|}\sum_{(G,y)\in\mathcal{D}}\log P(y|f(G))$
where $P(y|f(G))$ denotes the softmax probability that the final representation of an input graph $f(G)$ belongs to class $y$, usually obtained by a linear layer (or multi-layer perceptrons).

The major advantage of \proposed is the availability of modeling  \textit{position-specific} classification weights,\footnote{In this paper, we use the term ``classification weight'' to represent the weight parameters of the classification layer.} which can be implemented by a fully-connected layer. 
Note that the fully-connected layer combined with \proposed introduces the classification weights independently for each structural position.
As a result, the GNN classifier is able to learn more accurate classification weights for the nodes with different structural semantics.
For a final representation $\gmat=\text{\proposed}(\text{GNN}(G;\Theta);\pmat)$, the softmax probability is defined by
\begin{equation}
\label{eq:softmax}
    P(y=c|\gmat) = \frac{\exp\left(\sum_{k=1}^{K} \gvec{k}\cdot\wvec{k}{c} + b_c \right)}{\sum_{c'=1}^{C}\exp\left(\sum_{k=1}^{K} \gvec{k}\cdot\wvec{k}{c'} + b_{c'}\right)},
\end{equation}
where $\wmat_c=[\wvec{1}{c};\ldots;\wvec{K}{c}]\in\mathbb{R}^{K\times d}$ and $b_c\in\mathbb{R}$ are the classification weight matrix and bias term for class $c$, respectively.
That is, the output score for class $c$ is computed by $s_c = \langle \gmat, \wmat_c \rangle + b_c$. 

\begin{algorithm}[t]
\small
\setstretch{1.2}
\DontPrintSemicolon
    \KwIn{A training set of graph instances $\mathcal{D}$}
	\KwOut{GNN parameters $\Theta$, $\mathcal{W}=\{\wmat_{1},\ldots,\wmat_{C}, \bm{b} \}$, and the structural prototypes $\pmat$}
	\SetKwComment{Comment}{$\triangleright$\ }{} 
	\While{not converged} {
        \For{$(G, y)\in\mathcal{D}$} {
        \nonl \Comment*[r]{Compute the scores for $C$ classes}
        $\hmat \leftarrow \text{GNN}(G;\Theta)$ \;
        $\mmat^* \leftarrow \argmin_{\mmat\in\mathcal{M}}\langle \mmat, \Delta(\pmat, \hmat)\rangle$ \;
        $\gmat \leftarrow [\phi(\hmat;\mmat^*,1);\ldots; \phi(\hmat;\mmat^*,K)]$ \;
        $\svec \leftarrow \text{softmax}([\langle\gmat,\wmat_{1}\rangle+b_1, \ldots, \langle\gmat,\wmat_{C}\rangle+b_C])$ \;
    
        \vspace{3pt}
        \nonl \Comment*[r]{Calculate the two losses}
        $\ploss \leftarrow \softssa(\hmat, \pmat)$ \;
        $\closs \leftarrow \text{CrossEntropy}(y, \svec)$ \;
        
        \vspace{3pt}
        \nonl \Comment*[r]{Update all the model parameters}
        $\pmat \leftarrow \pmat - \eta\cdot{\partial\ploss}/{\partial\pmat}$ \;
        $\Theta \leftarrow \Theta - \eta\cdot {\partial\closs}/{\partial\Theta}$ \;
        $\mathcal{W} \leftarrow \mathcal{W} - \eta\cdot {\partial\closs}/{\partial\mathcal{W}}$ \;
        }
    }
    \caption{Optimization of a GNN classifier}
    \label{alg:pseudocode}
\end{algorithm}

\subsubsection{Self-supervised learning based on augmented graphs}
\label{subsubsec:selfsupervised}
In addition to supervised learning, we present how to learn the GNN graph encoder that adopts the \proposed layer in an unsupervised manner. 
Inspired by the recent success of self-supervised learning in computer vision~\cite{chen2020simple} and natural language processing~\cite{devlin2019bert}, several self-supervised methods for graph representation learning were developed~\cite{rong2020self,you2020graph} and they showed remarkable results for many downstream tasks.
Among them, the graph-level contrastive loss~\cite{you2020graph} can be utilized instead of the classification loss for training our graph encoder, because it provides supervision on graph-level representations based on the relationship among graphs.

Precisely, graph contrastive learning learns the graph representations of positively-related (i.e., similar) instances to be close, while those of negatively-related (i.e., dissimilar) ones far from each other.
To identify similar and dissimilar graph pairs without any class labels, it utilizes the graph augmentation function $T$ which stochastically adds small noises to an input graph (e.g., random node drop, edge perturbation).
For a target graph $G$, any augmented graphs from $T(G)$ are regarded as \textit{positive}, and all the other graphs in the training set are used as \textit{negative}.
That is, the contrastive loss for graph representation learning is described by
\begin{equation}
\begin{split}
\label{eq:contrloss}
    &\sloss = \\ &-\frac{1}{|\mathcal{D}|}\sum_{G\in\mathcal{D}} \expect_{G_+\sim\mathcal{T}(G)}\log\frac{\exp\left(\text{sim}(G, G_+)\right)}{\sum_{G_-\in\mathcal{D}\backslash\{G\}}\exp\left(\text{sim}(G, G_-)\right)}.
\end{split}
\raisetag{43pt}
\end{equation}
The sim function is defined by the cosine similarity between the projected representations of two graphs;
given a GNN graph encoder $f$ and a projection function $g$,
$\text{sim}(G_1, G_2)=\zvec{1}\cdot\zvec{2}/\lVert\zvec{1}\rVert\lVert\zvec{1}\rVert$ where $\zvec{1}=g(f(G_1))$ and $\zvec{2}= g(f(G_2))$.
To exploit the position-level representation obtained by the \proposed layer, we flatten the representation by $f(G)=(\text{Flatten}\circ\text{\proposed}\circ\text{GNN})(G)$.
For efficiency and scalability, only the graphs in each minibatch are considered as negative for the computation of Equation~\eqref{eq:contrloss}, as done in~\cite{you2020graph, chen2020simple}.

\subsubsection{Optimization}
\label{subsubsec:optimization}
Algorithm~\ref{alg:pseudocode} describes the overall process of training the GNN classifier that adopts the \proposed layer.
As training progresses, the GNN parameters $\Theta$ and the structural prototypes $\pmat$ collaboratively improve with each other.
To be precise, updating the GNN parameters results in better node representations $\hmat=\text{GNN}(G;\Theta)$ that encodes the high-level features of local graph structures, thereby $\pmat$ can learn the prototypical features from the enhanced representations.
In addition, using more accurate structural prototypes for the \proposed layer encourages the GNN to learn further discriminative features because the classification loss is computed from the position-level representation $\gmat=\text{\proposed}(\hmat;\pmat)$. 

In case of unsupervised learning, the GNN parameters except for the classification weights can be updated by the contrastive loss $\sloss$ (defined in Equation~\eqref{eq:contrloss}), instead of the classification loss $\closs$.
Without any graph labels, the GNN graph encoder is effectively trained to output useful graph representations, while its \proposed layer is optimized to identify $K$ structural positions based on the semantic alignment with the structural prototypes.

\section{Experiments}
\label{sec:exp}


\subsection{Experimental Settings}
\label{subsec:expset}
\subsubsection{Datasets}
In our experiments, we use 6 graph classification benchmarks from the biochemical domain (i.e., \dnd, \mutag, \mutagen, \nci, and \proteins) and social domain (i.e., \imdbb and \imdbm), collected in TU datasets~\cite{morris2020tudataset}.
For in-depth analyses on molecular graphs, we utilize the additional data sources (e.g., SMILES) of \mutag downloaded from the external ChemDB.\footnote{http://cdb.ics.uci.edu/cgibin/LearningDatasetsWeb.py}
Figure~\ref{fig:example} shows several examples of graph instances from \mutag and \imdbb, respectively.

\subsubsection{Baselines}
\label{subsubsec:baseline}
We validate the effectiveness of our \proposed layer against the global readout layer (denoted by \textbf{\gread}) by using various GNN architectures.
As the main architecture, we employ the GNN classifier that consists of three \gcn layers~\cite{kipf2017semi} without any hierarchical pooling layers (denoted by \textbf{\gcn}).
We also consider the state-of-the-art GNN classifiers that use advanced message passing or hierarchical pooling.
\begin{itemize}
    \item \textbf{\gin}~\cite{xu2018powerful}: The graph isomorphism network that tailors the GNN to be injective for maximizing its representational capacity as powerful as the Wisfeiler-Lehman (WL) test.
    \item \textbf{\dgcnn}~\cite{zhang2018end}: The GNN that adopts 1D conv layers after sorting top-$N'$ nodes by their last hidden dimension.
    \item \textbf{\diffpool}~\cite{ying2018hierarchical}: The GNN that adopts differentiable graph pooling layers based on the soft cluster assignment.
    \item \textbf{\sagpool}~\cite{lee2019self}: The GNN with pooling layers that keep only a portion of nodes based on their attention scores.
    \item \textbf{\gunet}~\cite{gao2019graph}: The U-Net architecture that downscales a graph by selecting nodes based on their projected scores.
\end{itemize}
Note that the goal of our work is to enhance the discrimination power of existing GNN classifiers by replacing their \gread layer with the \proposed layer.
For this reason, we follow the details of each architecture (i.e., the composition of GNN layers and its classification module) provided by the original papers.\footnote{Since the property of various message passing and hierarchical pooling is all different, their optimal GNN architecture also cannot be the same. For this reason, we employ the entire architecture suggested by the original papers, rather than simply replacing each GNN layer in a single fixed architecture.}
The detailed architectures are described in Table~\ref{tbl:gnnarchs}.
We remark that any other hierarchical graph pooling techniques~\cite{noutahi2019towards, yuan2020structpool}, which are not included in our experiments, are also compatible with our readout layer;
this means that their final classification performance can be further improved with the help of \proposed.

In addition to the supervised task that learns from labeled graphs, we also evaluate our readout layer optimized in an unsupervised setting.
As several unsupervised methods for node-level representation learning~\cite{sun2019infograph, velivckovic2018deep} can be applied to graph classification, we consider one of them as well as the self-supervised method for graph-level representation learning~\cite{you2020graph}.
Both of them necessarily use any readout, which can be either \gread or \proposed.
\begin{itemize}
    \item \textbf{\graphim}~\cite{sun2019infograph}: The graph learning framework that maximizes the mutual information between graph-level and subgraph-level (of various scales) representations.
    \item \textbf{\graphcl}~\cite{you2020graph}: The graph contrastive learning framework based on graph augmentation (Section~\ref{subsubsec:selfsupervised}).
\end{itemize}

\begin{table}[t]
\caption{Statistics of the datasets.}
\label{tbl:datastats}
\centering
\resizebox{0.99\linewidth}{!}{%
\begin{tabular}{rcccc}
    \hline
    \textbf{Datasets} & \textbf{\#Graphs} & \textbf{Avg.\#Nodes} & \textbf{Avg.\#Edges} & \textbf{\#Classes} \\\hline
    \dnd & 1,178 & 284.32 & 715.66 & 2 \\
    \mutag & \ \ 188 & \ \ 17.93 & \ \ 19.79 & 2 \\
    \mutagen & 4,337 & \ \ 30.32 & \ \ 30.77 & 2 \\
    \nci & 4,110 & \ \ 29.87 & \ \ 32.30 & 2 \\    
    \proteins & 1,113 & \ \ 39.06 & \ \ 72.82 & 2 \\\hline
    \imdbb & 1,000 & \ \ 19.77 & \ \ 96.53 & 2 \\    
    \imdbm & 1,500 & \ \ 13.00 & \ \ 65.94 & 3 \\    
    \hline
\end{tabular}
}
\end{table}

\subsubsection{Implementation Details}
\label{subsubsec:implementation}
We implement all the GNN classifiers, including the \gread and \proposed layers, by using PyTorch\footnote{https://pytorch.org/} and PyTorch Geometric.\footnote{https://pytorch-geometric.readthedocs.io/en/latest/}
For global aggregation functions, we employ the official implementation of \gsum, \gmax, \gmean, \gattn, \gset, and \gsort, provided by Pytorch Geometric.
In case of \gsort, we tailor its aggregation module from $(\text{\gsort}:\mathcal{P}(\mathbb{R}^d)\rightarrow\mathbb{R}^{N'\times d})$ that outputs sorted node representations to $(\text{Flatten}\circ\text{Conv}\circ\text{\gsort}:\mathcal{P}(\mathbb{R}^d)\rightarrow\mathbb{R}^d)$ that outputs a summarized representation, in order to facilitate to combine it with our \proposed layer.
We only consider the global aggregation functions officially implemented in PyTorch Geometric, but any other global aggregations, such as the second-order pooling~\cite{wang2020second}, also can be combined with our \proposed layer to output the position-level representation.

\subsubsection{Experimental settings}
\label{subsubsec:expset}
For quantitative evaluation on graph classification, we follow the fair evaluation setup suggested by~\cite{errica2019fair}.
In detail, we train each GNN classifier with 10-fold cross validation, while we further partition the training data into training/validation sets with the ratio of 9:1.
The maximum number of epochs is set to 500, and we stop training if the performance on the validation set does not improve for consecutive 50 epochs.
We repeat each experiment five times with different random seeds and report their mean performance with the standard deviation.
All the experiments are conducted on NVIDIA Titan Xp for GPU parallel computation.
The elapsed time for training the GNN classifiers (Figure~\ref{fig:trainingtime}) is also measured in this environment.


For each dataset, we choose the optimal number of structural positions $K\in\{2, 4, 8, 16\}$ that achieves the best validation accuracy for the base classifier, i.e., \gcn + \proposed (\gsum).
In case of the smoothing parameter $\gamma$, we fix its value to 0.01 without the search to eliminate the benefit from hyperparameter tuning.
The sensitivity analysis on these hyperparameters is provided in Section~\ref{subsec:sensitivity}.

\begin{table*}[t]
\caption{The detailed architecture of GNN classifiers. ``MP'': message passing layer, ``Pool'': hierarchical pooling layer.
All the layers (i.e., MP, Pool, and Readout) are optimized by target supervised/self-supervised tasks in an end-to-end way.}
\label{tbl:gnnarchs}
\centering
\resizebox{0.99\linewidth}{!}{%
\begin{tabular}{ccccccc}
    \hline
    \textbf{GNN} & \textbf{Message} & \textbf{Hierarchical} & \textbf{Pooling} & \textbf{Architecture} & \multirow{2}{*}{\textbf{Readout}} & \multirow{2}{*}{\textbf{Classification}} \\
    \textbf{Classifier} & \textbf{Passing} & \textbf{Pooling} & \textbf{Ratio} & \textbf{(Before Readout)} & & \\\hline
    \gin~\cite{xu2018powerful} & \gin~\cite{xu2018powerful} & - & -  & MP+MP+MP+MP+MP & \gsum & Linear \\
    \dgcnn~\cite{zhang2018end} & \gcn~\cite{kipf2017semi} & - & - & MP+MP+MP+MP & \gsort(+Conv) & 2L MLP \\
    \diffpool~\cite{ying2018hierarchical} & SAGE~\cite{hamilton2017inductive} & \diffpool~\cite{ying2018hierarchical} & 0.1 & (MP+MP+MP)+Pool+MP & \gmax & 2L MLP \\\
    \sagpool~\cite{lee2019self} & \gcn~\cite{kipf2017semi} & \sagpool~\cite{lee2019self} & 0.5,0.5,0.5 & MP+Pool+MP+Pool+MP+Pool & \gmax,\gmean & 3L MLP \\
    \gunet~\cite{gao2019graph} & \gcn~\cite{kipf2017semi} & \topkpool~\cite{gao2019graph} & 0.9,0.8,0.7 & {\small (MP+Pool+MP+Pool+MP+Pool)+(MP+MP+MP)} & \gsum,\gmax,\gmean & 2L MLP \\    
    \hline
\end{tabular}
}
\end{table*}

\begin{table*}[t]
\caption{Classification accuracy of the base GNN classifier with \gread and \proposed, using five different aggregation functions. The results of statistically-significant improvement (i.e., $p \leq 0.05$ from the paired $t$-test) are marked in bold face.}
\label{tbl:gpoolresults}
\centering
\resizebox{0.99\linewidth}{!}{%
\begin{tabular}{Lccccccc}
    \hline
    \textbf{GNN Architecture} & \textbf{\dnd} & \textbf{\mutag} & \textbf{\mutagen} & \textbf{\nci} & \textbf{\proteins} & \textbf{\imdbb} & \textbf{\imdbm} \\\hline
    GCN + \gread (\gsum) & 67.25 (0.92) & 84.84 (1.24) & 80.11 (0.25) & 79.79 (0.30) & 71.80 (0.80) & 70.62 (0.49) & 47.20 (0.66) \\
    GCN + \proposed (\gsum) & \textbf{70.23} (1.03) & \textbf{87.16} (0.84) & \textbf{81.37} (0.58) & \textbf{81.60} (0.42) & \textbf{73.55} (0.57) & \textbf{71.18} (0.45) & \textbf{47.81} (0.60) \\\hline
    GCN + \gread (\gmax) & 73.87 (0.99) & 85.54 (1.61) & 78.91 (0.41) & 71.08 (0.88) & 67.37 (0.67) & 70.48 (0.64) & 46.63 (0.65) \\
    GCN + \proposed (\gmax) & \textbf{75.86} (0.61) & \textbf{87.87} (1.19) & \textbf{80.32} (0.50) & \textbf{73.19} (0.83) & \textbf{69.02} (1.08) & \textbf{71.26} (0.46) & \textbf{47.73} (0.30) \\\hline
    GCN + \gread (\gmean) & 65.77 (0.28) & 84.63 (1.20) & 79.64 (0.53) & 78.47 (0.78) & 68.86 (0.67) & 70.72 (0.47) & 46.89 (0.28) \\
    GCN + \proposed (\gmean) & \textbf{66.71} (0.88) & \textbf{86.30} (1.25) & \textbf{81.86} (0.59) & \textbf{80.88} (0.31) & \textbf{71.35} (0.77) & 70.94 (0.74) & \textbf{48.24} (0.32) \\\hline
    GCN + \gread (\gattn) & 70.68 (0.79) & 84.37 (0.82) & 80.85 (0.25) & 78.26 (0.28) & 70.62 (0.42) & 70.98 (0.56) & 47.05 (0.33) \\
    GCN + \proposed (\gattn) & 71.03 (0.54) & \textbf{85.35} (0.56) & \textbf{81.27} (0.33) & \textbf{80.83} (0.41) & \textbf{71.41} (0.85) & \textbf{71.66} (0.54) & \textbf{47.99} (0.66) \\\hline
    GCN + \gread (\gset) & 72.26 (0.63) & 84.61 (0.90) & 80.37 (0.58) & 78.29 (0.50) & 70.13 (0.81) & 70.94 (0.56) & 47.31 (0.40) \\
    GCN + \proposed (\gset) & \textbf{73.14} (0.82) & \textbf{87.37} (1.56) & \textbf{81.14} (0.68) & \textbf{80.56} (0.28) & 71.23 (1.12) & 71.06 (0.26) & \textbf{48.13} (0.13) \\\hline
    GCN + \gread (\gsort) & 71.03 (0.75) & 83.85 (1.07) & 79.04 (0.35) & 78.09 (0.44) & 71.56 (0.55) & 70.56 (0.27) & 47.76 (0.66) \\
    GCN + \proposed (\gsort) & \textbf{72.24} (0.76) & \textbf{86.48} (1.05) & \textbf{81.46} (0.22) & \textbf{80.44} (0.35) & \textbf{73.78} (0.19) & \textbf{71.26} (0.51) & \textbf{48.53} (0.34) \\\hline
\end{tabular}
}
\end{table*}

\subsection{Effectiveness of \proposed}
\subsubsection{The effectiveness of \proposed with various aggregation functions}
\label{subsubsec:comppool}
To validate the effectiveness of our \proposed, we first compare the performance of the base GNN classifier that adopts either \gread or \proposed.
We consider six different aggregation functions $\phi$:
\gsum, \gmax, \gmean, \gattn, \gset, and \gsort.
The aggregation functions that employ parametric modules (i.e., \gattn, \gset, and \gsort) are also optimized for the target task.
In Table~\ref{tbl:gpoolresults}, \proposed shows significantly higher classification accuracy than \gread for all the datasets.
In case of \gread, the best performing aggregation function varies depending on the target dataset, while \proposed further improves its performance independently with the type of aggregation. 
Particularly, for all the datasets except for \dnd, \proposed (with \gsum) outperforms \gread (with its best performing $\phi$).
The results demonstrate good compatibility of \proposed with any aggregation functions used for its position-level readout operation.

\subsubsection{The effectiveness of \proposed with the state-of-the-art GNN classifiers}
\label{subsubsec:compgnn}
\begin{table*}[t]
\caption{Classification accuracy of the state-of-the-art GNN classifiers with \gread and \proposed. The results of statistically-significant improvement (i.e., $p \leq 0.05$ from the paired $t$-test) are marked in bold face.}
\label{tbl:gnnresults}
\centering
\resizebox{0.99\linewidth}{!}{%
\begin{tabular}{Lccccccc}
    \hline
    \textbf{GNN Architecture} & \textbf{\dnd} & \textbf{\mutag} & \textbf{\mutagen} & \textbf{\nci} & \textbf{\proteins} & \textbf{\imdbb} & \textbf{\imdbm} \\\hline
    \gin + \gread(\gsum) & 70.81 (0.94) & 86.09 (0.23) & 82.30 (0.34) & 80.71 (0.34) & 71.28 (0.48) & 70.44 (0.59) & 47.23 (0.55)\\
    \gin + \proposed(\gsum) & \textbf{72.24} (0.54) & \textbf{87.22} (0.65) & \textbf{82.73} (0.31) & \textbf{82.90} (0.33) & \textbf{73.14} (0.44) & \textbf{71.64} (0.52) & \textbf{47.93} (0.37) \\
    \hline
    \dgcnn + \gread(\gsort) & 73.57 (0.66) & 82.43 (0.67) & 74.94 (0.54) & 72.86 (0.46) & 71.66 (0.69) & 70.48 (0.70) & 47.06 (0.59) \\
    \dgcnn + \proposed(\gsort) & \textbf{74.72} (0.84) & \textbf{84.77} (0.75) & \textbf{77.15} (0.48) & \textbf{75.66} (0.65) & \textbf{72.79} (0.78) & 70.78 (0.94) & \textbf{47.83} (0.42)\\
    \hline
    \diffpool + \gread(\gmax) & 76.27 (1.04) & 80.22 (0.77) & 81.62 (0.42) & 80.02 (0.14) & 71.79 (0.50) & 63.62 (1.01) & 45.60 (0.45) \\
    \diffpool + \proposed(\gmax) & 77.23 (0.97) & \textbf{82.20} (0.66) & 81.52 (0.36) & \textbf{80.54} (0.19) & 71.21 (0.68) & \textbf{64.66} (0.66) & \textbf{46.72} (0.39) \\
    \hline
    \sagpool + \gread(\gmax, \gmean) & 73.65 (0.44) & 74.89 (0.78) & 76.04 (0.53) & 71.50 (0.56) & 71.50 (0.71) & 55.56 (0.99) & 39.10 (0.23) \\
    \sagpool + \proposed(\gmax, \gmean) & \textbf{75.74} (0.80) & \textbf{76.06} (0.79) & 75.72 (0.73) & \textbf{72.61} (0.61) & \textbf{73.05} (0.81) & \textbf{58.30} (0.86) & \textbf{39.93} (0.42) \\
    \hline
    \gunet + \gread(\gsum, \gmax, \gmean) & 74.43 (0.35) & 83.29 (0.75) & 81.84 (0.55) & 79.27 (0.26) & 75.42 (0.27) & 57.94 (1.03) & 43.33 (0.70) \\
    \gunet + \proposed(\gsum, \gmax, \gmean) & \textbf{76.96} (0.57) & \textbf{84.34} (0.56) & 81.99 (0.54) & \textbf{79.71} (0.32) & 75.20 (0.36) & \textbf{59.16} (0.79) & \textbf{44.47} (0.99) \\
    \hline
\end{tabular}
}
\end{table*}
\begin{table*}[t]
\caption{Classification accuracy on the graph-level representations from \gread and \proposed, optimized in an unsupervised setting. The results of statistically-significant improvement (i.e., $p \leq 0.05$ from the paired $t$-test) are marked in bold face.}
\label{tbl:selfsupresults}
\centering
\resizebox{0.99\linewidth}{!}{%
\begin{tabular}{lZcccccccc}
    \hline
    \textbf{Learning} & \textbf{GNN Architecture} & \textbf{\dnd} & \textbf{\mutag} & \textbf{\mutagen} & \textbf{\nci} & \textbf{\proteins} & \textbf{\imdbb} & \textbf{\imdbm} \\\hline
    \multirow{2}{*}{\graphim} 
    & \gin + \gread (\gsum) & 70.46 (0.84) & 84.91 (0.61) & 73.93 (0.55) & 70.54 (0.46) & 72.94 (0.78) & 68.66 (0.49) & 47.52 (0.34) \\
    & \gin + \proposed (\gsum) & \textbf{72.51} (0.81) & \textbf{86.73} (0.68) & \textbf{77.20} (0.51) & \textbf{73.73} (0.43) & \textbf{74.00} (0.30) & \textbf{69.52} (0.28) & \textbf{49.31} (0.28) \\\hline
    
    \multirow{2}{*}{\graphcl} 
    & \gin + \gread (\gsum) & 71.47 (0.28) & 85.61 (0.65) & 74.67 (0.08) & 70.65 (0.96) & 73.07 (0.52) & 70.58 (0.27) & 47.76 (0.42) \\
    & \gin + \proposed (\gsum) & \textbf{72.94} (0.45) & \textbf{88.21} (0.73) & \textbf{78.45} (0.15) & \textbf{74.77} (0.68) & \textbf{74.19} (0.27) & \textbf{72.12} (0.34) & \textbf{49.52} (0.30) \\
    \hline
\end{tabular}
}
\end{table*}
Next, we compare the performance of the state-of-the-art GNN classifiers that adopt either \gread or \proposed. 
For the readout of each classifier, we adopt the aggregation function $\phi$ suggested in the original papers:
they simply utilize one of the aggregators (i.e., \gsum, \gmax, \gmean, \gsort)~\cite{xu2018powerful, lee2019self, zhang2018end}, or use multiple types of aggregation functions together to concatenate the obtained representations~\cite{ying2018hierarchical, gao2019graph}. 
In Table~\ref{tbl:gnnresults}, \proposed achieves the consistent improvement over \gread for most cases.
Even though the existing GNN classifiers adopt several pooling layers for hierarchical graph summarization, their last \gread layer incurs information loss on the global structure of an input graph.
Thus, our \proposed layer successfully boosts the performance by additionally leveraging the global structural information, as a completely independent module with other hierarchical graph pooling layers.

\subsubsection{The effectiveness of \proposed in an unsupervised setting}
\label{subsubsec:compselfsup}
We also investigate the effectiveness of \proposed in the unsupervised setting, where only unlabeled graphs are available for training the GNN graph encoder.
To this end, we first optimize the GIN, which adopts either \gread or \proposed as its readout layer, based on the unsupervised (or self-supervised) learning frameworks.
Then, we measure the performance of a linear SVM classifier trained on their graph-level representations with the class labels, based on 10-fold cross validation; 
it is also known as \textit{linear evaluation} protocol conventionally used to assess the quality of representations in the unsupervised setting~\cite{velivckovic2018deep, sun2019infograph, you2020graph, rong2020self}.
In Table~\ref{tbl:selfsupresults}, \proposed brings a significant improvement over \gread for both the cases of \graphim and \graphcl.
Interestingly, their performances are comparable to that of the supervised classifiers (Table~\ref{tbl:gnnresults}, \gin) especially for the datasets containing only a small number of graphs (e.g., \dnd, \mutag, and \proteins).
This is because the supervised GNN classifiers are easily overfitted to the limited number of training graphs and their class labels, whereas the unsupervised GNN encoders are less affected by the class labels and encode richer semantic related to graph structures.
In conclusion, \proposed successfully learns the structural prototypes from the unsupervised tasks as well, 
which are as effective as the supervised tasks for several datasets, thereby it enhances the quality of graph-level representation.

\begin{figure}[t]
	\centering
	\includegraphics[width=\linewidth]{./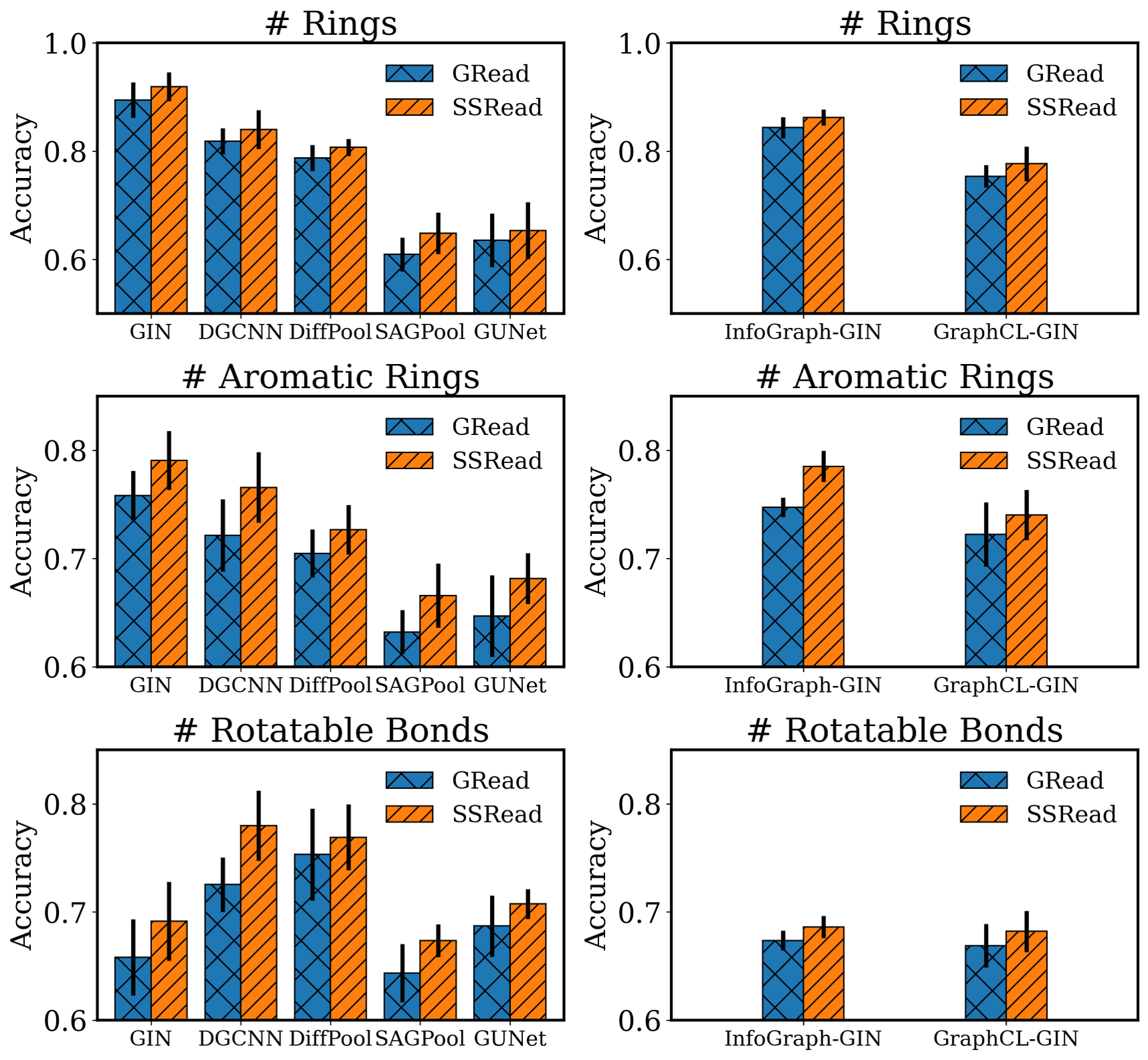}
	\caption{Performance of \gread and \proposed in predicting global structural properties, optimized in supervised (Left) and unsupervised (Right) settings. Dataset: \mutag.}
	\label{fig:prediction}
\end{figure}

\subsubsection{The effectiveness of \proposed in predicting global structural properties}
\label{subsubsec:compmp}
To evaluate how effectively the structural information is encoded into the graph-level representation, we introduce new tasks of predicting several structural properties\footnote{They are extracted from SMILES of each molecule by using the rdkit.Chem package.} (i.e., the number of rings, aromatic rings, and rotatable bonds) on the \mutag dataset.
Similar to Section~\ref{subsubsec:compselfsup}, we measure the accuracy of a linear SVM classifier based on 10-fold cross validation, after optimizing the GNN encoder in the supervised or unsupervised manner.
In Figure~\ref{fig:prediction}, \proposed significantly beats \gread for all the cases, implying that it can effectively capture the structural properties into the final representations.
There are two minor observations on the comparison results.
First, the supervised \gin performs better than the unsupervised \gin in case of predicting the number of rings.
This is because there is a strong correlation between the number of rings in each molecule (the target structural property) and its mutagenic activity (the class label)~\cite{debnath1991structure}.\footnote{In \mutag, the molecules possessing one or two fused rings have much less mutagenic potency than compounds with three or more fused rings~\cite{debnath1991structure}.}
%
Second, \graphim performs better than \graphcl for these tasks (Figure~\ref{fig:prediction}, Right), which indicates that supervision from augmented graphs in \graphcl is not enough to learn the global structural properties.
Despite these dynamics of performances, our \proposed steadily strengthens their capability to capture the global structures compared to the case of using the \gread, since every GNN architecture requires a readout operation to obtain graph-level representations.

\begin{figure}[t]
	\centering
	\includegraphics[width=\linewidth]{./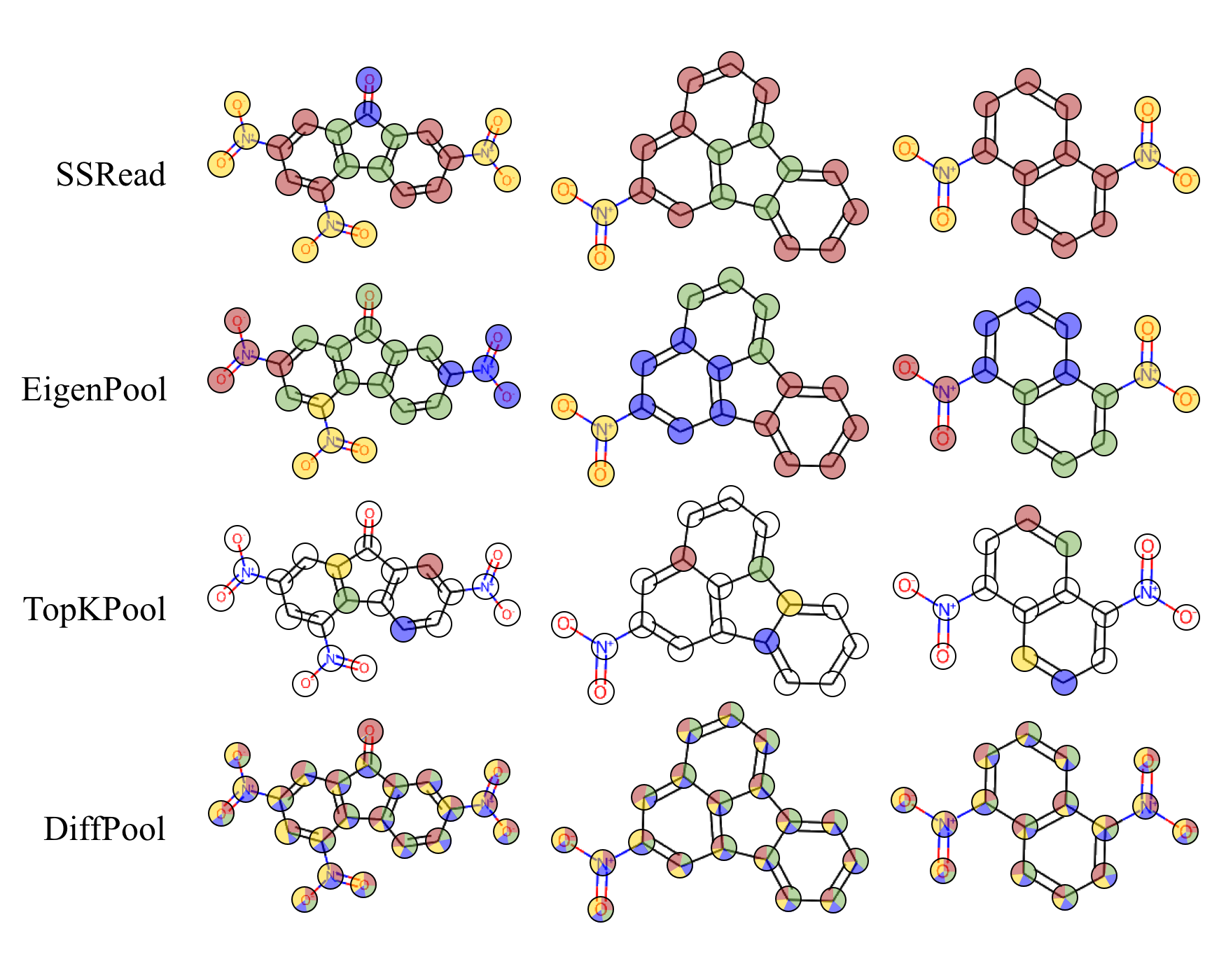}
	\caption{Visualization of our structural semantic alignment ($K=4$) and other hierarchical graph poolings ($N'=4$). The structural positions and node-clusters (or selected nodes) are represented in different colors. Dataset: \mutag.}
	\label{fig:analysis}
\end{figure}

\subsection{Qualitative Analysis}
\label{subsec:qualitative}
We visualize the node-position alignment obtained by \proposed ($K=4$), so as to compare it with the result of node clustering or node selection obtained by existing graph poolings ($N'=4$).
In Figure~\ref{fig:analysis}, three molecular graphs from \mutag are drawn with their nodes marked in different colors.
\proposed clearly identifies consistent structural positions, each of which actually corresponds to a different functional group derived by chemical knowledge:
\textit{benzene rings} (red), \textit{carbonyl group} (blue), \textit{nitro group} (yellow), and \textit{cyclo-pentadiene} (green).
On the other hand, \eigenpool~\cite{ma2019graph} that performs spectral clustering partitions each graph into $N'$ connected subgraphs, but each subgraph (or node cluster) does not imply any consistent chemical semantic;
the same functional groups are colored differently across different molecules as well as within a single molecule.
\topkpool~\cite{gao2019graph} selects only $N'$ nodes within benzene rings while dropping all the other nodes, and the soft cluster assignment of \diffpool~\cite{ying2018hierarchical} is hard to be interpreted or matched with any functional groups.
In conclusion, our \proposed is capable of performing semantic segmentation on input graphs without using explicit labels of structural semantic, by learning the structural prototypes from the training dataset.
This can be practically used for discovering meaningful substructures from graph datasets.

We also qualitatively analyze the class activation map (CAM)~\cite{zhou2016learning, pope2019explainability} on test graphs, which provides an explanation on how much each structural region (i.e., node) contributes to the classification of a target graph.
Similar to grad-CAM~\cite{selvaraju2017grad},
the activation of node $n$ for its target class $c$ is defined by $\text{CAM}_c(n) = ({\partial s_c}/{\partial \hvec{n}})\cdot \hvec{n}$, where the node-wise gradient $(\partial s_c / \partial \hvec{n}) \in \mathbb{R}^{d}$ implies the position-specific classification weight modeled by Equation~\eqref{eq:softmax}.
Figure~\ref{fig:camscore} presents correctly-classified test graphs (whose computed softmax probabilities are larger than 0.9), with their nodes highlighted proportionally to their CAM scores.
Note that the class label of \mutag is highly correlated with the number of rings in a molecule, as discussed in Section~\ref{subsubsec:compmp}.
From the perspective of this chemical knowledge, \proposed accurately localizes the ring structure as the discriminative region of class 1, whereas \gread more focuses on the rest of the ring when predicting class 1.
Since \gread aggregates all node representations regardless of their structural position, it identifies the nitro and carbonyl groups as the representative structure for class 1, which are clearly distinguished from the ring structure of class 2.
In contrast, our position-level readout allows to extract more discriminative features within the ring structure (precisely, its corresponding structural position), while keeping the importance of other structures lower.
This analysis supports that the position-specific classification weights help to improve the localization performance of the GNN classifier.

\begin{figure}[t]
	\centering
	\includegraphics[width=\linewidth]{./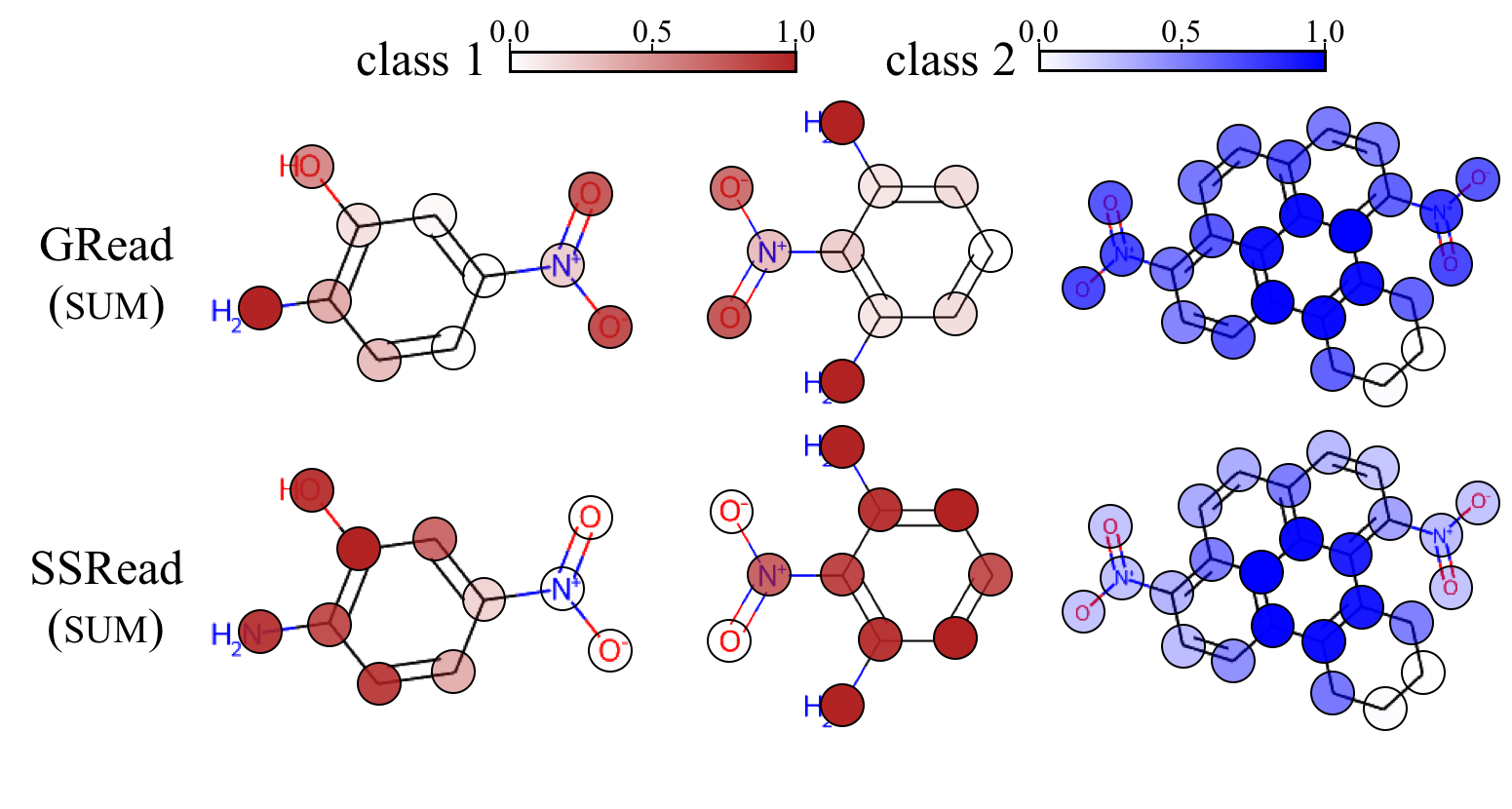}
	\caption{Test graphs from class 1 (Red) and class 2 (Blue), highlighted with their CAM scores. Dataset: \mutag.}
	\label{fig:camscore}
\end{figure}
\begin{figure}[t]
	\centering
	\includegraphics[width=\linewidth]{./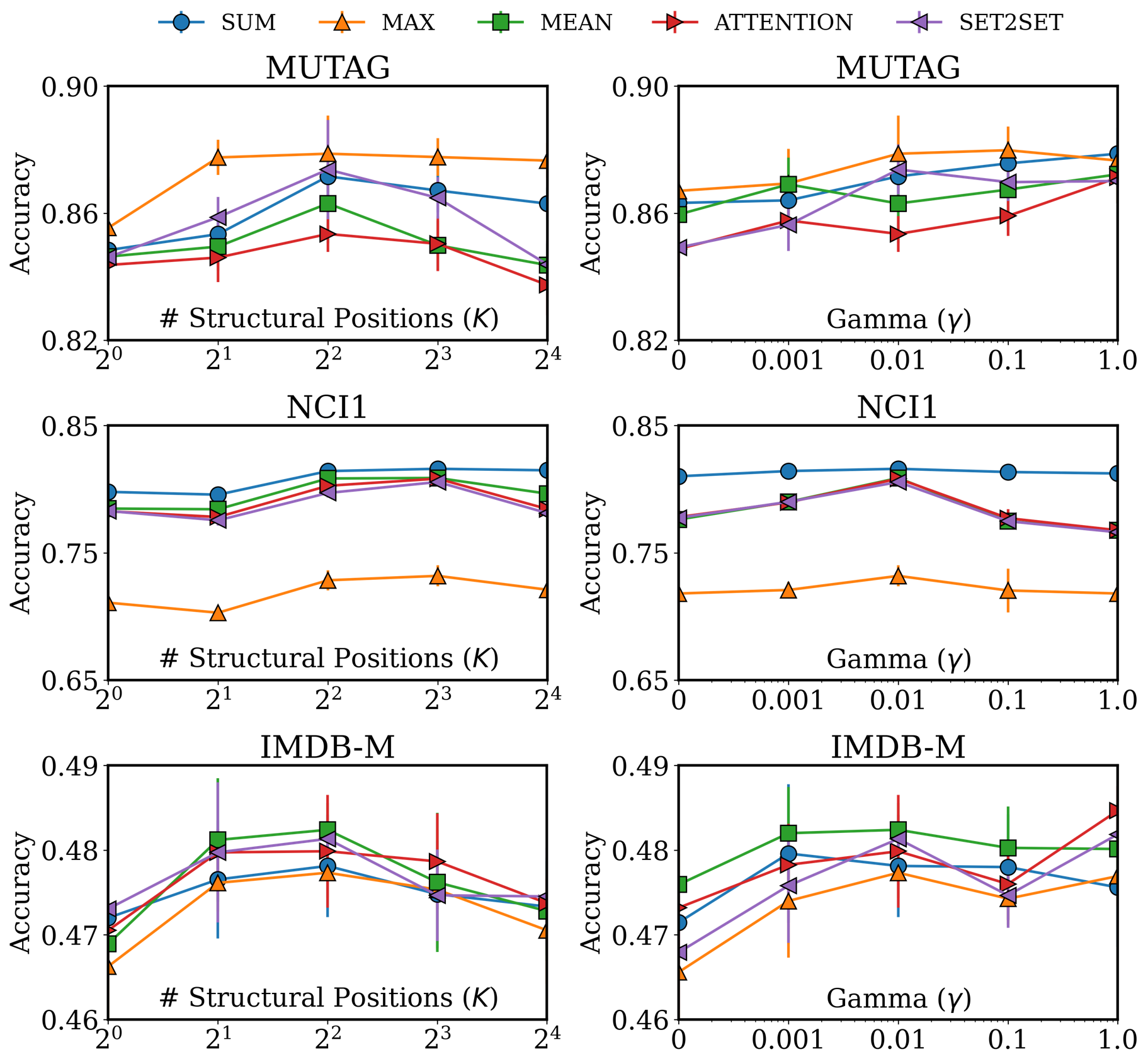}
	\caption{Performance change varying the number of structural positions (Left) and the smoothing parameter (Right). }
	\label{fig:sensitivity}
\end{figure}

\subsection{Sensitivity Analysis}
\label{subsec:sensitivity}
We investigate the performance changes of the base GNN classifier with respect to (i) the number of structural positions $K$ and (ii) the smoothing parameter $\gamma$ in the soft-SSA cost.
In terms of $K$, for each dataset, we observe a consistent trend across the aggregation functions (Figure~\ref{fig:sensitivity}, Left).
Specifically, the GNN classifier performs the best at $K=4$ for \mutag and \imdbm, and $K=8$ for \nci.
This is because all graph instances in the same dataset share global properties, such as the number of nodes and the combination of substructures, which are directly relevant to the optimal number of structural positions.
Thus, we tune the value of $K$ for each dataset, as described in Section~\ref{subsubsec:expset}.

On the other hand, the optimal $\gamma$ value varies depending on the aggregation functions as well as the target dataset (Figure~\ref{fig:sensitivity}, Right).
Thus we simply fix it by $\gamma=0.01$ rather than using its optimal value in our experiments, for a fair setting.
As an ablation analysis on $\gamma$, it is obvious that $\gamma=0$ leads to significantly lower performance than its appropriate value $\gamma>0$, which support our claim that the naive gradient of hard alignment cost (Equation~\eqref{eq:align}) cannot effectively optimize the structural prototypes for better alignment.

Finally, we report the training time of the state-of-the-art GNN classifiers on the \dnd and \nci datasets,\footnote{In Table~\ref{tbl:datastats}, each graph from the \dnd dataset has a large number of nodes, while the \nci dataset contains a large number of graph instances.} increasing $K$ from 2 to 16.
In Figure~\ref{fig:trainingtime}, the training time of the GNN classifiers with \proposed is not much different compared to the case of \gread.
This strongly indicates that the computation of the optimal alignment hardly affects the total training time, as discussed in Section~\ref{subsubsec:complexity}.
Moreover, the forward and backward computation of Equation \eqref{eq:softssa} (Lines 7 and 9 in Algorithm~\ref{alg:pseudocode}) does not incur much cost by Theorem~\ref{thm:softssa}, which facilitates the efficient optimization.

\begin{figure}[t]
	\centering
	\includegraphics[width=\linewidth]{./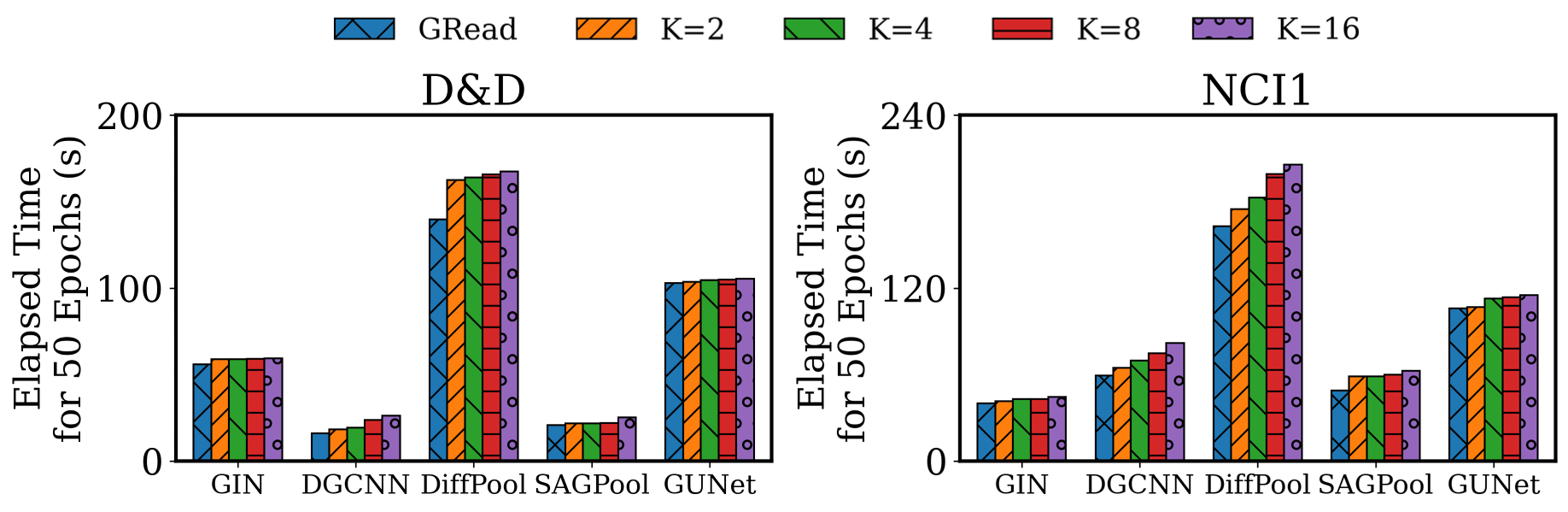}
	\caption{Training time of the GNN classifiers, increasing $K$.}
	\label{fig:trainingtime}
\end{figure}

\section{Conclusion}
\label{sec:conc}
This paper proposes a novel graph readout technique, named as \proposed, which outputs structured (or position-level) representations in order to explicitly leverage the global structural information for graph classification.
To this end, \proposed first identifies the structural position of the nodes by using the semantic alignment between the node representations and the structural prototypes, which are optimized to best summarize $K$ structural semantics observed in the training graphs.
Then, it produces position-level representations by aggregating the node representations in each position.
Our experiments support that \proposed consistently enhances the classification performance and interpretability of the GNN classifier while providing great compatibility with various aggregation functions, GNN architectures, and learning frameworks.

\smallsection{Acknowledgement}
This work was supported by the NRF grant funded by the MSIT (No. 2020R1A2B5B03097210, 2021R1C1C1009081), and the IITP grant funded by the MSIT (No. 2018-0-00584, 2019-0-01906).

\bibliographystyle{IEEEtran}
\bibliography{BIB/bibliography}

\end{document}